\DeclareMathOperator{\E}{\mathbb{E}}
\def\!#1{\boldsymbol{#1}}
\def\*#1{\mathbf{#1}}
\def\E{\mathbb{E}}
\newtheorem{theorem}{Theorem}
\def\note#1{\def\tempa{#1}\futurelet\next\note@i}
\def\note@i{\ifx\next\bgroup\expandafter\note@ii\else\expandafter\note@end\fi}
\def\note@ii#1{\textcolor{red}{\tempa\ : #1}}
\def\note@end{\textcolor{red}{: \tempa}}
\title{Causal Effect Inference with \\ Deep Latent-Variable Models}
\author{ Christos Louizos \\
University of Amsterdam\\
TNO Intelligent Imaging\\
\texttt{c.louizos@uva.nl}
\And
Uri Shalit  \\
New York University\\
CIMS\\
\texttt{uas1@nyu.edu}
\And
Joris Mooij   \\
University of Amsterdam\\
\texttt{j.m.mooij@uva.nl}
\AND 
David Sontag\\
Massachusetts Institute of Technology\\
CSAIL \& IMES\\
\texttt{dsontag@csail.mit.edu}
\And 
Richard Zemel\\
University of Toronto\\
CIFAR\thanks{Canadian Institute For Advanced Research}\\
\texttt{zemel@cs.toronto.edu}
\And 
Max Welling\\
University of Amsterdam\\
CIFAR$^*$\\
\texttt{m.welling@uva.nl}
}
\begin{document}

\maketitle

\begin{abstract}
Learning individual-level causal effects from observational data, such as inferring the most effective medication for a specific patient, is a problem of growing importance for policy makers. The most important aspect of inferring causal effects from observational data is the handling of confounders, factors that affect both an intervention and its outcome. A carefully designed observational study attempts to measure all important confounders. However, even if one does not have direct access to all confounders, there may exist noisy and uncertain measurement of proxies for confounders.
We build on recent advances in latent variable modeling to simultaneously estimate the unknown latent space summarizing the confounders and the causal effect. 
Our method is based on Variational Autoencoders (VAE) which follow the causal structure of inference with proxies. We show our method is significantly more robust than existing methods, and matches the state-of-the-art on previous benchmarks focused on individual treatment effects. 
\end{abstract}

\section{Introduction}
Understanding the causal effect of an intervention $\*t$ on an individual with features $\*X$ is a fundamental problem across many domains. Examples include understanding the effect of medications on a patient's health, or of teaching methods on a student's chance of graduation. With the availability of large datasets in domains such as healthcare and education, there is much interest in developing methods for learning individual-level causal effects from observational data \citep{pearl2015detecting,wager2015estimation,johansson2016learning,peysakhovich2016combining}.

The most crucial aspect of inferring causal relationships from observational data is confounding. A variable which affects both the intervention and the outcome is known as a \emph{confounder} of the effect of the intervention on the outcome. On the one hand, if such a confounder can be measured, the standard way to account for its effect is by ``controlling'' for it, often through covariate adjustment or propensity score re-weighting \citep{morgan2014counterfactuals}. On the the other hand, if a confounder is hidden or unmeasured, it is impossible in the general case (i.e. without further assumptions) to estimate the effect of the intervention on the outcome \citep{pearl2009causality}. For example, socio-economic status can affect both the medication a patient has access to, and the patient's general health. Therefore socio-economic status acts as confounder between the medication and health outcomes, and without measuring it we cannot in general isolate the causal effect of medications on health measures. Henceforth we will denote observed potential confounders\footnote{Including observed covariates which do not affect the intervention or outcome, and therefore are not truly confounders.} by $\*X$, and unobserved confounders by $\*Z$.

In most real-world observational studies we cannot hope to measure all possible confounders. For example, in many studies we cannot measure variables such as personal preferences or most genetic and environmental factors. An extremely common practice in these cases is to rely on so-called ``proxy variables'' \cite[Ch. 11]{montgomery2000measuring,angrist2008mostly,maddala1992introduction}. For example, we cannot measure the socio-economic status of patients directly, but we might be able to get a proxy for it by knowing their zip code and job type. One of the promises of using big-data for causal inference is the existence of myriad proxy variables for unmeasured confounders. 

How should one use these proxy variables? The answer depends on the relationship between the hidden confounders, their proxies, the intervention and outcome \citep{kuroki2011measurement,miao2016identifying}. Consider for example the causal graphs in Figure~\ref{fig:hid1}: it's well known \citep{griliches1986errors,fuller1987measurement,greenland2008bias,kuroki2011measurement,pearl2012measurement} that it is often \emph{incorrect} to treat the proxies $\*X$ as if they are ordinary confounders, as this would induce bias. See the Appendix for a simple example of this phenomena. The aforementioned papers give methods which are guaranteed to recover the true causal effect when proxies are observed. However, the strong guarantees these methods enjoy rely on strong assumptions. In particular, it is assumed that the hidden confounder is either categorical with known number of categories, or that the model is linear-Gaussian. 

\begin{figure}[t]
 \centering
 	\begin{tikzpicture}[scale=0.8, transform shape]
       \node [latent] (Z) {$\*Z$};
       \node [obs, above=of Z, xshift= 40pt] (T) {$\*t$};
       \node [obs, above=of T, xshift= -40pt] (Y) {$\*y$};	
       \node [obs, above=of Z, xshift= -40pt] (X) {$\*X$};
       \edge {T}{Y};
       \edge {Z}{T};
       \edge {Z}{Y};    
       \edge {Z}{X};
 	\end{tikzpicture}
    \caption{Example of a proxy variable. $\*t$ is a treatment, e.g. medication; $\*y$ is an outcome, e.g. mortality. $\*Z$ is an unobserved confounder, e.g. socio-economic status; and $\*X$ is noisy views on the hidden confounder $\*Z$, say income in the last year and place of residence.}
    \vskip -10pt
    \label{fig:hid1}    
\end{figure}
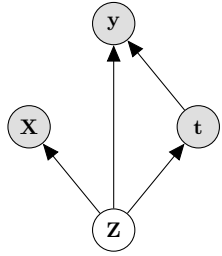

In practice, we cannot know the exact nature of the hidden confounder $\*Z$: whether it is categorical or continuous, or if categorical how many categories it includes. Consider socio-economic status (SES) and health. Should we conceive of SES as a continuous or ordinal variable? Perhaps SES as confounder is comprised of two dimensions, the economic one (related to wealth and income) and the social one (related to education and cultural capital). $\*Z$ might even be a mix of continuous and categorical, or be high-dimensional itself. This uncertainty makes causal inference a very hard problem even with proxies available.
We propose an alternative approach to causal effect inference tailored to the {\em surrogate-rich} setting when many proxies are available: estimation of a latent-variable model where we simultaneously discover the hidden confounders and infer how they affect treatment and outcome. Specifically, we focus on (approximate) maximum-likelihood based methods.

Although in many cases learning latent-variable models are computationally intractable \citep{thiesson1998learning,arora2005learning}, the machine learning community has made significant progress in the past few years developing computationally efficient algorithms for latent-variable modeling. These include methods with provable guarantees, typically based on the method-of-moments (e.g. \citet{anandkumar2012method}); as well as robust, fast, heuristics such as variational autoencoders (VAEs) \citep{kingma2013auto,rezende2014stochastic}, based on stochastic optimization of a variational lower bound on the likelihood, using so-called recognition networks for approximate inference.

Our paper builds upon VAEs. This has the disadvantage that little theory is currently available to justify when learning with VAEs can identify the true model. However, they have the significant advantage that they make substantially weaker assumptions about the data generating process and the structure of the hidden confounders. Since their recent introduction, VAEs have been shown to be remarkably successful in capturing latent structure across a wide-range of previously difficult problems, such as modeling images~\citep{gregor2015draw}, volumes~\citep{rezende2016unsupervised}, time-series~\citep{chung2015recurrent} and fairness~\citep{louizos2015variational}.

We show that in the presence of noisy proxies, our method is more robust against hidden confounding, in experiments where we successively add noise to known-confounders. Towards that end we introduce a new causal inference benchmark using data about twin births and mortalities in the USA. We further show that our method is competitive on two existing causal inference benchmarks. 
Finally, we note that our method does not currently deal with the related problem of selection bias, and we leave this to future work.

{\bf Related work.} \;
Proxy variables and the challenges of using them correctly have long been considered in the causal inference literature \citep{wickens1972note,frost1979proxy}. Understanding what is the best way to derive and measure possible proxy variables is an important part of many observational studies \citep{filmer2001estimating,kolenikov2009socioeconomic,wooldridge2009estimating}.
Recent work by \citet{cai2008identifying,greenland2008bias}, building on the work of \citet{greenland1983correcting,selen1986adjusting}, has studied conditions for causal identifiability using proxy variables.
The general idea is that in many cases one should first attempt to infer the joint distribution $p(\*X,\*Z)$ between the proxy and the hidden confounders, and then use that knowledge to adjust for the hidden confounders \citep{wooldridge2009estimating,pearl2012measurement,kuroki2014measurement,miao2016identifying,edwards2015all}.
For the example in Figure~\ref{fig:hid1}, \citet{cai2008identifying,greenland2008bias,pearl2012measurement} show that if $\*Z$ and $\*X$ are categorical, with $\*X$ having at least as many categories as $\*Z$, and with the matrix $p(\*X,\*Z)$ being full-rank, one could identify the causal effect of $\*t$ on $\*y$ using a simple matrix inversion formula, an approach called ``effect restoration''.
Conditions under which one could identify more general and complicated proxy models were recently given by \citep{miao2016identifying}.

\section{Identification of causal effect}\label{sec:ident}
Throughout this paper we assume the causal model in Figure~\ref{fig:hid1}. For simplicity and compatibility with prior benchmarks we assume that the treatment $\*t$ is binary, but our proposed method does not rely on that. We further assume that the joint distribution $p\left(\*Z,\*X,\*t,\*y\right)$ of the latent confounders $\*Z$ and the observed confounders $\*X$ can be approximately recovered solely from the observations $\left(\*X,\*t,\*y\right)$. While this is impossible if the hidden confounder has no relation to the observed variables, there are many cases where this is possible, as mentioned in the introduction.
For example, if $\*X$ includes three independent views of $\*Z$ \citep{anandkumar2012method,hsu2012spectral,goodman1974exploratory,allman2009identifiability}; if $\*Z$ is categorical and $\*X$ is a Gaussian mixture model with components determined by $\*X$ \citep{anandkumar2014tensor};  or if $\*Z$ is comprised of binary variables and $\*X$ are so-called ``noisy-or'' functions of $\*Z$ \citep{jernite2013discovering,tengyu_noisyor16}. Recent results show that certain VAEs can recover a very large class of latent-variable models \citep{tran2015variational} as a minimizer of an optimization problem; the caveat is that the optimization process is not guaranteed to achieve the true minimum even if it is within the capacity of the model, similar to the case of classic universal approximation results for neural networks.

\subsection{Identifying individual treatment effect}
Our goal in this paper is to recover the individual treatment effect (ITE), also known as the conditional average treatment effect (CATE), of a treatment $\*t$, as well as the average treatment effect (ATE):
\begin{align*}
ITE(x) := \E\left[\*y |\*X=x, do(\*t=1)\right] - \E\left[\*y |\*X=x, do(\*t=0)\right], 
\quad \quad ATE := \mathbb{E}[ITE(x)] 
\end{align*}
Identification in our case is an immediate result of Pearl's back-door adjustment formula \citep{pearl2009causality}:

\begin{theorem}
If we recover $p\left(\*Z,\*X,\*t,\*y\right)$ then we recover the ITE under the causal model in Figure~\ref{fig:hid1}.
\end{theorem}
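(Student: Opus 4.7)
The plan is to apply Pearl's back-door adjustment formula to the graph of Figure~\ref{fig:hid1}, conditioning on the latent confounder $\*Z$ (which is assumed recoverable together with $\*X,\*t,\*y$), and then simplify using the d-separation structure of the graph to reduce everything to quantities that are determined by the joint $p(\*Z,\*X,\*t,\*y)$.

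First I would check the back-door criterion: the only back-door path from $\*t$ to $\*y$ in Figure~\ref{fig:hid1} is $\*t \leftarrow \*Z \rightarrow \*y$, and conditioning on $\*Z$ blocks it. Hence the conditional back-door formula yields
\begin{equation*}
p\bigl(\*y \mid \*X = x,\, do(\*t)\bigr) \;=\; \int p(\*y \mid \*X = x, \*t, \*z)\, p(\*z \mid \*X = x)\, d\*z.
\end{equation*}
Second, I would use a d-separation argument: since the parents of $\*y$ in the graph are only $\*t$ and $\*Z$, and every path from $\*X$ to $\*y$ goes through $\*Z$ (the unique parent of $\*X$), we have $\*y \perp \*X \mid (\*Z, \*t)$. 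Therefore $p(\*y \mid \*X = x, \*t, \*z) = p(\*y \mid \*t, \*z)$, and taking expectations gives
\begin{equation*}
\E[\*y \mid \*X = x, do(\*t)] \;=\; \int \E[\*y \mid \*t, \*z]\, p(\*z \mid \*X = x)\, d\*z.
\end{equation*}

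Third, I would subtract the $\*t = 0$ and $\*t = 1$ versions of this identity to obtain
\begin{equation*}
ITE(x) \;=\; \int \bigl( \E[\*y \mid \*t=1, \*z] - \E[\*y \mid \*t=0, \*z] \bigr)\, p(\*z \mid \*X = x)\, d\*z,
\end{equation*}
and observe that both factors in the integrand---the outcome regression $\E[\*y \mid \*t, \*z]$ and the posterior $p(\*z \mid \*X = x)$---are pure conditionals of the joint $p(\*Z, \*X, \*t, \*y)$. Thus, knowledge of this joint suffices to compute $ITE(x)$, and by further marginalizing over $x \sim p(\*X)$ one recovers the ATE as well.

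There is no real technical obstacle here; the statement is essentially a bookkeeping exercise in do-calculus. The only point requiring mild care is the \emph{conditional} form of the back-door adjustment (conditioning simultaneously on a covariate $\*X$ and an adjustment set $\*Z$), together with the verification that $\*X$ and $\*y$ are d-separated given $(\*Z,\*t)$ in this particular graph, which justifies dropping $\*X$ from the outcome regression. I would therefore keep the write-up short, stating the two identities above and appealing directly to Pearl's back-door theorem for their justification.
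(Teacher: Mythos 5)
Your proposal is correct and follows essentially the same route as the paper: the conditional back-door/do-calculus identity $p(\*y\mid \*X, do(\*t)) = \int p(\*y\mid \*X,\*t,\*z)\,p(\*z\mid \*X)\,d\*z$, with all quantities read off from the recovered joint $p(\*Z,\*X,\*t,\*y)$. The additional simplification via $\*y \perp \*X \mid (\*Z,\*t)$ that you invoke is also noted in the paper (as a remark after the proof) and is not needed for identification itself, so there is no substantive difference.
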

\begin{proof}
We will prove that $p\left(\*y |\*X, do(\*t=1)\right)$ is identifiable under the premise of the theorem. The case for $\*t=0$ is identical, and the expectations in the definition of ITE above  readily recovered from the probability function. ATE is identified if ITE is identified.
We have that:
\begin{align}
&p\left(\*y |\*X, do(\*t=1)\right) =  
\int_{\*Z} p\left(\*y|\*X,do(\*t=1),\*Z\right)p\left(\*Z|\*X,do(\*t=1)\right) \, d\*Z \stackrel{(i)}{=} \nonumber \\
&\int_{\*Z} p\left(\*y|\*X,\*t=1,\*Z\right)p\left(\*Z|\*X\right) \, d\*Z , \label{eq:ident}
\end{align}

where equality (i) is by the rules of \emph{do}-calculus applied to the causal graph in Figure~\ref{fig:hid1} \citep{pearl2009causality}. This completes the proof since the quantities in the final expression of Eq.~\eqref{eq:ident} can be identified from the distribution $p\left(\*Z,\*X,\*t,\*y\right)$  which we know by the Theorem's premise.
\end{proof}
Note that the proof and the resulting estimator in Eq.~\eqref{eq:ident} would be identical whether there is or there is not an edge from $\*X$ to $\*t$. This is because we intervene on $\*t$.
Also note that for the model in Figure~\ref{fig:hid1}, $\*y$ is independent of $\*X$ given $\*Z$, and we obtain:
$p\left(\*y |\*X, do(\*t=1)\right) =\int_{\*Z}p\left(\*y|\*t=1,\*Z\right)p\left(\*Z|\*X\right) \, d\*Z.$
In the next section we will show how we estimate  $p\left(\*Z,\*X,\*t,\*y\right)$ from observations of $(\*X,\*t,\*y)$.

\section{Causal effect variational autoencoder}
\begin{figure*}[htb!]
\centering
\hspace*{\fill}%
    \subfigure[Inference network, $q(\*z,t,y|\*x)$. ]{\includegraphics[width=.43\textwidth]{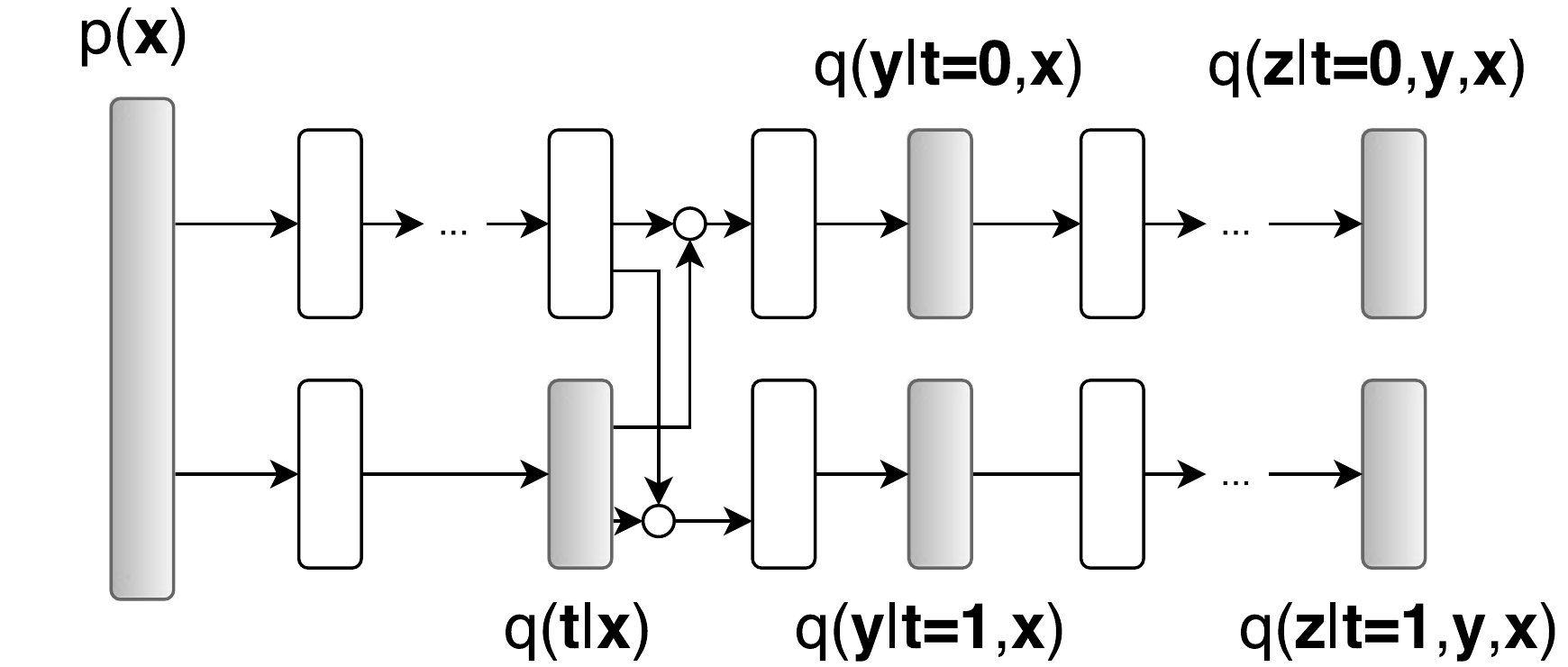}}\hfill%
    \subfigure[Model network, $p(\*x,\*z, t,y)$.]{\includegraphics[height=10em,width=12em]{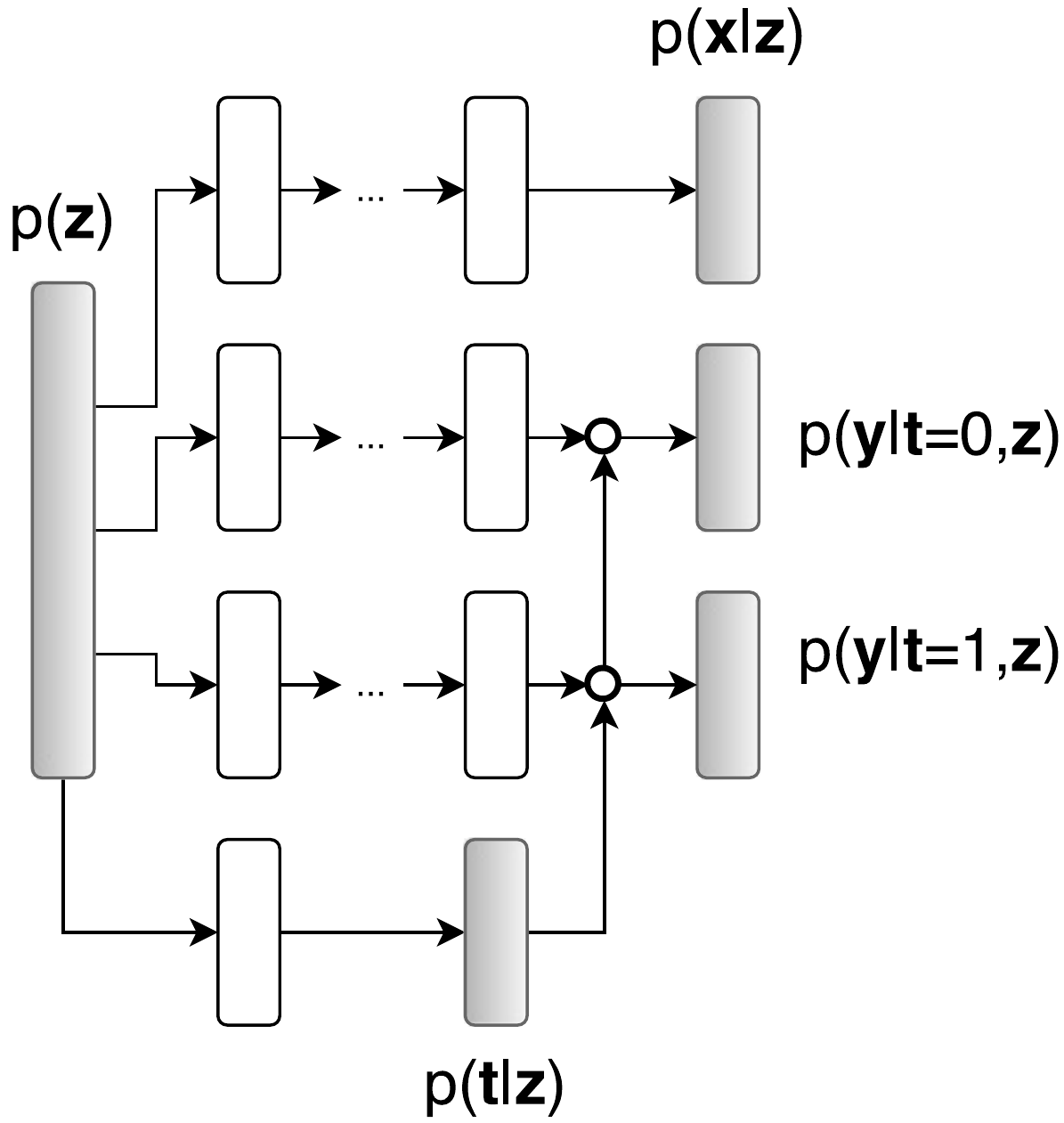}}
 \hspace*{\fill}%
 \caption{Overall architecture of the model and inference networks for the Causal Effect Variational Autoencoder (CEVAE). White nodes correspond to parametrized deterministic neural network transitions, gray nodes correspond to drawing samples from the respective distribution and white circles correspond to switching paths according to the treatment $t$.} 
 \label{fig:cevae}
\end{figure*}

The approach we take in this paper to the problem of learning the latent variable causal model is by using variational autoencoders \citep{kingma2013auto,rezende2014stochastic} to infer the complex non-linear relationships between $\*X$ and $(\*Z,\*t,\*y)$ and approximately recover $p\left(\*Z,\*X, \*t,\*y\right)$. Recent work has dramatically increased the range and type of distributions which can be captured by VAEs  \citep{tran2015variational,rezende2015variational,kingma2016improving}. The drawback of these methods is that because of the difficulty of guaranteeing global optima of neural net optimization, one cannot ensure that any given instance will find the true model even if it is within the model class. We believe this drawback is offset by the strong empirical performance across many domains of deep neural networks in general, and VAEs in particular. Specifically, we propose to parametrize the causal graph of Figure~\ref{fig:hid1} as a latent variable model with neural net functions connecting the variables of interest. The flexible non-linear nature of neural nets will hopefully allow us to approximate well the true interactions between the treatment and its effect.

Our design choices are mostly typical for VAEs: we assume the observations factorize conditioned on the latent variables, and use an inference network~\citep{kingma2013auto,rezende2014stochastic} which follows a factorization of the true posterior. For the generative model we use an architecture inspired by TARnet \citep{shalit2016estimating}, but instead of conditioning on observations we condition on the latent variables $\*z$; see details below.
For the following, $\*x_i$ corresponds to an input datapoint (e.g. the feature vector of a given subject), $t_i$ corresponds to the treatment assignment, $y_i$ to the outcome of the of the particular treatment and $\*z_i$ corresponds to the latent hidden confounder. Each of the corresponding factors is described as:
\begin{align}
p(\*z_i) = \prod_{j=1}^{D_z}\mathcal{N}(z_{ij}| 0, 1);\qquad
p(\*x_i|\*z_i) = \prod_{j=1}^{D_x}p(x_{ij}| \*z_i);\qquad
p(t_i|\*z_i) = \text{Bern}(\sigma(f_1(\*z_i))),
\end{align}
with $p(x_{ij}|\*z_i)$ being an appropriate probability distribution for the covariate $j$ and $\sigma(\cdot)$ being the logistic function, $D_x$ the dimension of $\*x$ and $D_z$ the dimension of $\*z$. For a continuous outcome we parametrize the probability distribution as a Gaussian with its mean given by a TARnet~\citep{shalit2016estimating} architecture, i.e. a treatment specific function, and its variance fixed to $\hat{v}$, whereas for a discrete outcome we use a Bernoulli distribution similarly parametrized by a TARnet:
\begin{align}
p(y_i|t_i, \*z_i) = \mathcal{N}(\mu=\hat{\mu}_i, \sigma^2=\hat{v})&\qquad
\hat{\mu}_i = t_i f_2(\*z_i) + (1 - t_i)f_3(\*z_i)\\
p(y_i|t_i, \*z_i) = \text{Bern}(\pi=\hat{\pi}_i)&\qquad
\hat{\pi}_i = \sigma(t_i f_2(\*z_i) + (1 - t_i)f_3(\*z_i)).
\end{align}
Note that each of the $f_k(\cdot)$ is a neural network parametrized by its own parameters $\theta_k$ for $k=1,2,3$. As we do not a-priori know the hidden confounder $\*z$ we have to marginalize over it in order to learn the parameters of the model $\theta_k$. Since the non-linear neural network functions make inference intractable we will employ variational inference along with inference networks; these are neural networks that output the parameters of a fixed form posterior approximation over the latent variables $\*z$, e.g. a Gaussian, given the observed variables. By the definition of the model at Figure~\ref{fig:hid1} we can see that the true posterior over $\*Z$ depends on $\*X,\*t$ and $\*y$. Therefore we employ the following posterior approximation:
\begin{align}
q(\*z_i|\*x_i, t_i, y_i) & = \prod_{j=1}^{D_z}\mathcal{N}(\mu_j=\bar{\mu}_{ij}, \sigma^2_j = \bar{\sigma}_{ij}^2)\\
\bar{\!\mu}_{i} = t_i \!\mu_{t=0, i} + (1 - t_i)\!\mu_{t=1, i}&\qquad
\bar{\!\sigma}^2_i = t_i \!\sigma^2_{t=0, i} + (1 - t_i)\!\sigma^2_{t=1, i}\nonumber\\
\!\mu_{t=0, i},\!\sigma^2_{t=0, i} = g_2\circ g_1(\*x_i, y_i) &\qquad
\!\mu_{t=1,i},\!\sigma^2_{t=1,i} = g_3 \circ g_1(\*x_i, y_i),\nonumber
\end{align}
where we similarly use a TARnet~\citep{shalit2016estimating} architecture for the inference network, i.e. split them for each treatment group in $t$ after a shared representation $g_1(\*x_i, y_i)$, and each $g_k(\cdot)$ is a neural network with variational parameters $\phi_k$. 
We can now form a single objective for the inference and model networks, the variational lower bound of this graphical model \citep{kingma2013auto,rezende2014stochastic}:
\begin{align}
\mathcal{L} & = \sum_{i=1}^{N}\E_{q(\*z_i|\*x_i,t_i,y_i)}[\log p(\*x_i, t_i|\*z_i) + \log p(y_i|t_i, \*z_i) + \log p(\*z_i) - \log q(\*z_i|\*x_i,t_i,y_i)]. \label{eq:lb_full}
\end{align}
Notice that for out of sample predictions, i.e. new subjects, we require to know the treatment assignment $t$ along with its outcome $y$ before inferring the distribution over $\*z$. For this reason we will introduce two auxiliary distributions that will help us predict $t_i,y_i$ for new samples. More specifically, we will employ the following distributions for the treatment assignment $t$ and outcomes $y$: 
\begin{align}
q(t_i | \*x_i) &= \text{Bern}(\pi = \sigma(g_4(\*x_i)))\\
q(y_i|\*x_i,t_i) = \mathcal{N}(\mu=\bar{\mu}_i, \sigma^2=\bar{v}) & \qquad \bar{\mu}_i = t_i (g_6 \circ g_5(\*x_i)) + (1 - t_i)(g_7 \circ g_5(\*x_i))\label{eq:y_contt}\\
q(y_i|\*x_i,t_i) = \text{Bern}(\pi=\bar{\pi}_i) & \qquad\bar{\pi}_i = t_i (g_6\circ g_5(\*x_i)) + (1 - t_i)(g_7 \circ g_5(\*x_i)),\label{eq:y_binn}
\end{align}
where we choose eq.~\ref{eq:y_contt} for continuous and eq.~\ref{eq:y_binn} for discrete outcomes. To estimate the parameters of these auxiliary distributions we will add two extra terms in the variational lower bound:
\begin{align}
\mathcal{F}_{\text{CEVAE}} = \mathcal{L} + \sum_{i=1}^{N}\big(\log q(t_i = t^*_i|\*x^*_i) + \log q(y_i=y^*_i|\*x^*_i, t^*_i)\big),\label{eq:objfn}
\end{align}
with $\*x_i,t^*_i,y^*_i$ being the observed values for the input, treatment and outcome random variables in the training set. We coin the name \emph{Causal Effect Variational Autoencoder} (CEVAE) for our method.

\section{Experiments}
Evaluating causal inference methods is always challenging because we usually lack ground-truth for the causal effects. Common evaluation approaches include creating synthetic or semi-synthetic datasets, where real data is modified in a way that allows us to know the true causal effect or real-world data where a randomized experiment was conducted. Here we compare with two existing benchmark datasets where there is no need to model proxies, IHDP~\citep{hill2011bayesian} and Jobs~\citep{lalonde1986evaluating}, often used for evaluating individual level causal inference. In order to specifically explore the role of proxy variables, we create a synthetic toy dataset, and introduce a new benchmark based on data of twin births and deaths in the USA.

For the implementation of our model we used Tensorflow~\citep{abadi2016tensorflow} and Edward~\citep{tran2016edward}. For the neural network architecture choices we closely followed~\cite{shalit2016estimating}; unless otherwise specified we used 3 hidden layers with ELU~\citep{clevert2015fast} nonlinearities for the approximate posterior over the latent variables $q(\*Z|\*X, \*t, \*y)$, the generative model $p(\*X|\*Z)$ and the outcome models $p(\*y|\*t,\*Z), q(\*y|\*t,\*X)$. For the treatment models $p(\*t|\*Z),q(\*t|\*X)$ we used a single hidden layer neural network with ELU nonlinearities. Unless mentioned otherwise, we used a 20-dimensional latent variable $\*z$ and used a small weight decay term for all of the parameters with $\lambda = .0001$. Optimization was done with Adamax~\citep{kingma2014adam} and a learning rate of $0.01$, which was annealed with an exponential decay schedule. We further performed early stopping according to the lower bound on a validation set. To compute the outcomes $p(\*y|\*X, do(\*t=1))$ and $p(\*y|\*X, do(\*t=0))$ we averaged over 100 samples from the approximate posterior $q(\*Z|\*X) = \sum_{\*t} \int q(\*Z|\*t,\*y,\*X)q(\*y|\*t,\*X)q(\*t|\*X) d\*y$.

Throughout this section we compare with several baseline methods. $LR1$ is logistic regression, $LR2$ is two separate logistic regressions fit to treated ($t=1$) and control ($t=0$). TARnet is a feed forward neural network architecture for causal inference \citep{shalit2016estimating}. 

\subsection{Benchmark datasets}
For the first benchmark task we consider estimating the individual and population causal effects on a benchmark dataset introduced by~\cite{hill2011bayesian}; it is constructed from data obtained from the Infant Health and Development Program (IHDP). Briefly, the confounders $\*x$ correspond to collected measurements of the children and their mothers used during a randomized experiment that studied the effect of home visits by specialists on future cognitive test scores. The treatment assignment is then ``de-randomized'' by removing from the treated set children with non-white mothers; for each unit a treated and a control outcome are then simulated, thus allowing us to know the ``true'' individual causal effects of the treatment. 
We follow~\cite{johansson2016learning,shalit2016estimating} and use 1000 replications of the simulated outcome, along with the same train/validation/testing splits. To measure the accuracy of the individual treatment effect estimation we use the Precision in Estimation of Heterogeneous Effect (PEHE)~\citep{hill2011bayesian}, $\text{PEHE} = \frac{1}{N}\sum_{i=1}^{N}((y_{i1} - y_{i0}) - (\hat{y}_{i1} - \hat{y}_{i0}))^2$, where $y_1, y_0$ correspond to the true outcomes under $t=1$ and $t=0$, respectively, and $\hat{y}_1, \hat{y}_0$ correspond to the outcomes estimated by our model. For the population causal effect we report the absolute error on the Average Treatment Effect (ATE). The results can be seen at Table~\ref{tab:ihdp}. As we can see, CEVAE has decent performance, comparable to the Balancing Neural Network (BNN) of~\cite{johansson2016learning}. 

\begin{table}[htb!]
\begin{minipage}{.51\linewidth}
\centering
\caption{Within-sample and out-of-sample mean and standard errors for the metrics for the various models at the IHDP dataset.}\label{tab:ihdp}
\begin{adjustbox}{max width=\linewidth}
\begin{tabular}{lcccc}
\toprule
Method & $\sqrt{\epsilon^{\text{within-s.}}_\text{PEHE}}$ & $\epsilon^{\text{within-s.}}_{\text{ATE}}$ & $\sqrt{\epsilon^{\text{out-of-s.}}_\text{PEHE}}$  & $\epsilon^{\text{out-of-s.}}_{\text{ATE}}$ \\\midrule
OLS-1 & 5.8$\pm$.3 & .73$\pm$.04 & 5.8$\pm$.3 & .94$\pm$.06\\
OLS-2 & 2.4$\pm$.1 & .14$\pm$.01 & 2.5$\pm$.1 & .31$\pm$.02\\
BLR & 5.8$\pm$.3 & .72$\pm$.04 & 5.8$\pm$.3 & .93$\pm$.05 \\
k-NN & 2.1$\pm$.1 & .14$\pm$.01 & 4.1$\pm$.2 & .79$\pm$.05\\
TMLE & 5.0$\pm$.2 & .30$\pm$.01 & - & - \\
BART & 2.1$\pm$.1 & .23$\pm$.01 & 2.3$\pm$.1 & .34$\pm$.02 \\
RF & 4.2$\pm$.2 & .73$\pm$.05 & 6.6$\pm$.3 & .96$\pm$.06 \\
CF & 3.8$\pm$.2 & .18$\pm$.01 & 3.8$\pm$.2 & .40$\pm$.03 \\
BNN & 2.2$\pm$.1 & .37$\pm$.03 & 2.1$\pm$.1 & .42$\pm$.03 \\
CFRW & .71$\pm$.0 & .25$\pm$.01 & .76$\pm$.0 & .27$\pm$.01\\\midrule
CEVAE & 2.7$\pm$.1 & .34$\pm$.01& 2.6$\pm$.1 & .46$\pm$.02\\
\bottomrule
\end{tabular}%
\end{adjustbox}
\end{minipage}\hfill
\begin{minipage}{.45\linewidth}
\centering
\caption{Within-sample and out-of-sample policy risk and error on the average
treatment effect on the treated (ATT) for the various models on the Jobs dataset.}\label{tab:jobs}
\begin{adjustbox}{max width=\linewidth}
\begin{tabular}{lcccc}
\toprule
Method & $R^{\text{within-s.}}_{pol}$ & $\epsilon^{\text{within-s.}}_{\text{ATT}}$ & $R^{\text{out-of-s.}}_{pol}$ & $\epsilon^{\text{out-of-s.}}_{\text{ATT}}$\\\midrule
LR-1 & .22$\pm$.0 & .01$\pm$.00 & .23$\pm$.0 & .08$\pm$.04\\
LR-2 & .21$\pm$.0 & .01$\pm$.01 & .24$\pm$.0 & .08$\pm$.03\\
BLR & .22$\pm$.0 & .01$\pm$.01 & .25$\pm$.0 & .08$\pm$.03 \\
k-NN & .02$\pm$.0 & .21$\pm$.01 & .26$\pm$.0 & .13$\pm$.05\\
TMLE & .22$\pm$.0 & .02$\pm$.01 & - & - \\
BART & .23$\pm$.0 & .02$\pm$.00 & .25$\pm$.0 & .08$\pm$.03 \\
RF & .23$\pm$.0 & .03$\pm$.01 & .28$\pm$.0 & .09$\pm$.04 \\
CF & .19$\pm$.0 & .03$\pm$.01 & .20$\pm$.0 & .07$\pm$.03 \\
BNN & .20$\pm$.0 & .04$\pm$.01 & .24$\pm$.0 & .09$\pm$.04 \\
CFRW &.17$\pm$.0 & .04$\pm$.01 & .21$\pm$.0 & .09$\pm$.03\\\midrule
CEVAE & .15$\pm$.0 & .02$\pm$.01 & .26$\pm$.0 & .03$\pm$.01\\
\bottomrule
\end{tabular}%
\end{adjustbox}
\end{minipage}
\end{table}

For the second benchmark we consider the task described at~\cite{shalit2016estimating} and follow closely their procedure. It uses a dataset obtained by the study of~\cite{lalonde1986evaluating,smith2005does}, which concerns the effect of job training (treatment) on employment after training (outcome). Due to the fact that a part of the dataset comes from a randomized control trial we can estimate the ``true'' causal effect. Following~\cite{shalit2016estimating} we report the absolute error on the Average Treatment effect on the Treated (ATT), which is the $\mathbb{E}\left[ITE(\*X)|\*t=1\right]$. For the individual causal effect we use the policy risk, that acts as a proxy to the individual treatment effect. The results after averaging over 10 train/validation/test splits can be seen at Table~\ref{tab:jobs}. As we can observe, CEVAE is competitive with the state-of-the art, while overall achieving the best estimate on the out-of-sample ATT.

\subsection{Synthetic experiment on toy data}

To illustrate that our model better handles hidden confounders we experiment on a toy simulated dataset where the marginal distribution of $\*X$ is a mixture of Gaussians, with the hidden variable $\*Z$ determining the mixture component. We generate synthetic data by the following process:
\begin{align}\label{eq:toygen}
\*z_i \sim \text{Bern}\left(0.5\right); & \qquad \*x_i | \*z_i \sim \mathcal{N}\left(\*z_i, \sigma^2_{z_1} \*z_i + \sigma^2_{z_0} (1 - \*z_i)\right) \nonumber\\
\*t_i |\*z_i \sim \text{Bern}\left(0.75 \*z_i + 0.25 (1 - \*z_i)\right); & \qquad  \*y_i |\*t_i,\*z_i \sim \text{Bern}\left(\text{Sigmoid}\left(3(\*z_i +2 (2\*t_i - 1))\right)\right),
\end{align}
where $\sigma_{z_0} = 3$, $\sigma_{z_1} = 5$ and $\text{Sigmoid}$ is the logistic sigmoid function. This generation process introduces hidden confounding between $\*t$ and $\*y$ as $\*t$ and $\*y$ depend on the mixture assignment $\*z$ for $\*x$. Since there is significant overlap between the two Gaussian mixture components we expect that methods which do not model the hidden confounder $z$ will not produce accurate estimates for the treatment effects. We experiment with both a binary $z$ for CEVAE, which is close to the true model, as well as a 5-dimensional continuous $z$ in order to investigate the robustness of CEVAE w.r.t. model misspecification. We evaluate across samples size $N \in \{1000,3000,5000,10000,30000\}$ and provide the results in Figure~\ref{fig:toy1}.  We see that no matter how many samples are given, $LR1$, $LR2$ and TARnet are not able to improve their error in estimating ATE directly from the proxies. On the other hand, CEVAE achieves significantly less error. When the latent model is correctly specified (CEVAE bin) we do better even with a small sample size; when it is not (CEVAE cont) we require more samples for the latent space to imitate more closely the true binary latent variable.

\begin{figure}[htb]
\centering
\includegraphics[width=0.6\columnwidth]{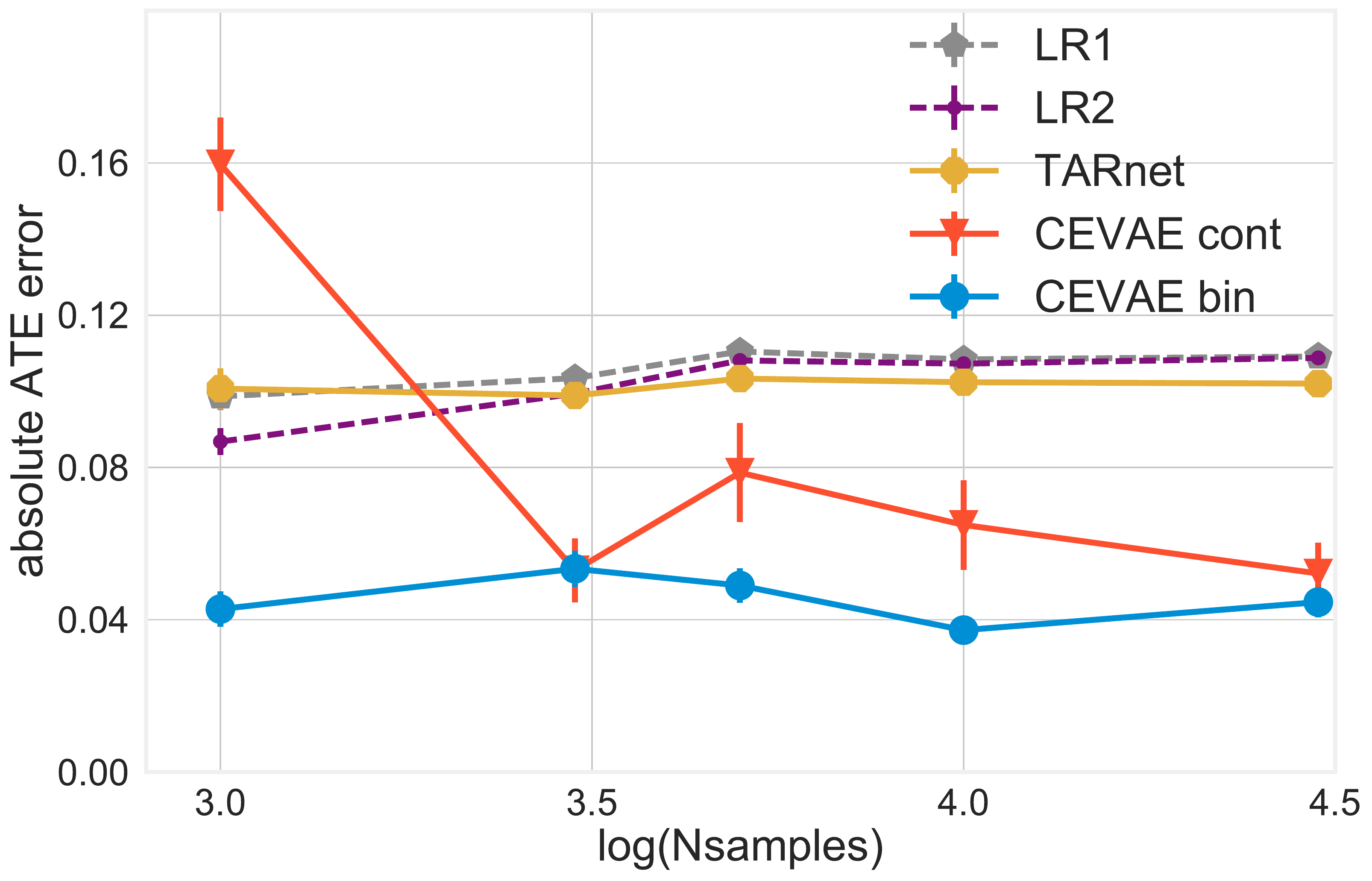}
\caption{Absolute error of estimating ATE on samples from the generative process \eqref{eq:toygen}. CEVAE bin and CEVAE cont are CEVAE with respectively binary or continuous 5-dim latent $z$. See text above for description of the other methods.}
\label{fig:toy1}
\end{figure}

\subsection{Binary treatment outcome on Twins}
We introduce a new benchmark task that utilizes data from twin births in the USA between 1989-1991 \citep{almond2005costs} \footnote{Data taken from the denominator file at http://www.nber.org/data/linked-birth-infant-death-data-vital-statistics-data.html}. The treatment $\*t=1$ is being born the heavier twin whereas, the outcome corresponds to the mortality of each of the twins in their first year of life. Since we have records for both twins, their outcomes could be considered as the two potential outcomes with respect to the treatment of being born heavier. We only chose twins which are the same sex. Since the outcome is thankfully quite rare (3.5\% first-year mortality), we further focused on twins such that both were born weighing less than $2kg$. We thus have a dataset of $11984$ pairs of twins. The mortality rate for the lighter twin is $18.9\%$, and for the heavier $16.4\%$, for an average treatment effect of $-2.5\%$. For each twin-pair we obtained 46 covariates relating to the parents, the pregnancy and birth: mother and father education, marital status, race and residence; number of previous births; pregnancy risk factors such as diabetes, renal disease, smoking and alcohol use; quality of care during pregnancy; whether the birth was at a hospital, clinic or home; and number of gestation weeks prior to birth. 

In this setting, for each twin pair we observed both the case $\*t=0$ (lighter twin) and $\*t=1$ (heavier twin). In order to simulate an observational study, we selectively hide one of the two twins; if we were to choose at random this would be akin to a randomized trial. In order to simulate the case of hidden confounding with proxies, we based the treatment assignment on a single variable which is highly correlated with the outcome: \texttt{GESTAT10}, the number of gestation weeks prior to birth. It is ordinal with values from $0$ to $9$ indicating birth before $20$ weeks gestation, birth after $20$-$27$ weeks of gestation and so on \footnote{The partition is given in the original dataset from NBER.}. 
We then set $ \*t_i |\*x_i, \*z_i \sim \text{Bern}\left(\sigma(w_o^\top \*x +w_h (\*z/10-0.1))\right)$, $w_o \sim \mathcal{N}(0,0.1 \cdot I)$, $w_h \sim \mathcal{N}(5,0.1)$, where $\*z$ is \texttt{GESTAT10} and $\*x$ are the 45 other features.

We created proxies for the hidden confounder as follows: We coded the 10 \texttt{GESTAT10}  categories with one-hot encoding, replicated 3 times. We then randomly and independently flipped each of these 30 bits. We varied the probabilities of flipping from $0.05$ to $0.5$,  the latter indicating there is no direct information about the confounder. We chose three replications following the well-known result that three independent views of a latent feature are what is needed to guarantee that it can be recovered \citep{kruskal1976more,allman2009identifiability,anandkumar2014tensor}. We note that there might still be proxies for the confounder in the other variables, such as the incompetent cervix covariate which is a known risk factor for early birth.
Having created the dataset, we focus our attention on two tasks: Inferring the mortality of the unobserved twin (counterfactual), and inferring the average treatment effect. We compare with TARnet, LR1 and LR2. We vary the number of hidden layers for TARnet and CEVAE (\emph{nh} in the figures). We note that while TARnet with 0 hidden layers is equivalent to $LR2$, CEVAE with 0 hidden layers still infers a latent space and is thus different. The results are given respectively in Figures \ref{fig:twin1} (higher is better) and \ref{fig:twin2} (lower is better).

For the counterfactual task, we see that for small proxy noise all methods perform similarly. This is probably due to the gestation length feature being very informative; for $LR1$, the noisy codings of this feature form 6 of the top 10 most predictive features for mortality, the others being sex (males are more at risk), and 3 risk factors: incompetent cervix, mother lung disease, and abnormal amniotic fluid.
For higher noise, TARnet, $LR1$ and $LR2$ see roughly similar degradation in performance;  CEVAE, on the other hand, is much more robust to increasing proxy noise because of its ability to infer a cleaner latent state from the noisy proxies. Of particular interest is CEVAE $nh=0$, which does much better for counterfactual inference than the equivalent $LR2$, probably because $LR2$ is forced to rely directly on the noisy proxies instead of the inferred latent state. 
For inference of average treatment effect, we see that at the low noise levels CEVAE does slightly worse than the other methods, with CEVAE $nh=0$ doing noticeably worse. However, similar to the counterfactual case, CEVAE is significantly more robust to proxy noise, achieving quite a low error even when the direct proxies are completely useless at noise level $0.5$.

\begin{figure*}[htb!]
\centering
\hspace*{\fill}%
    \subfigure[Area under the curve (AUC) for predicting the mortality of the unobserved twin in a hidden confounding experiment; higher is better. \label{fig:twin1}]{\includegraphics[width=.45\textwidth]{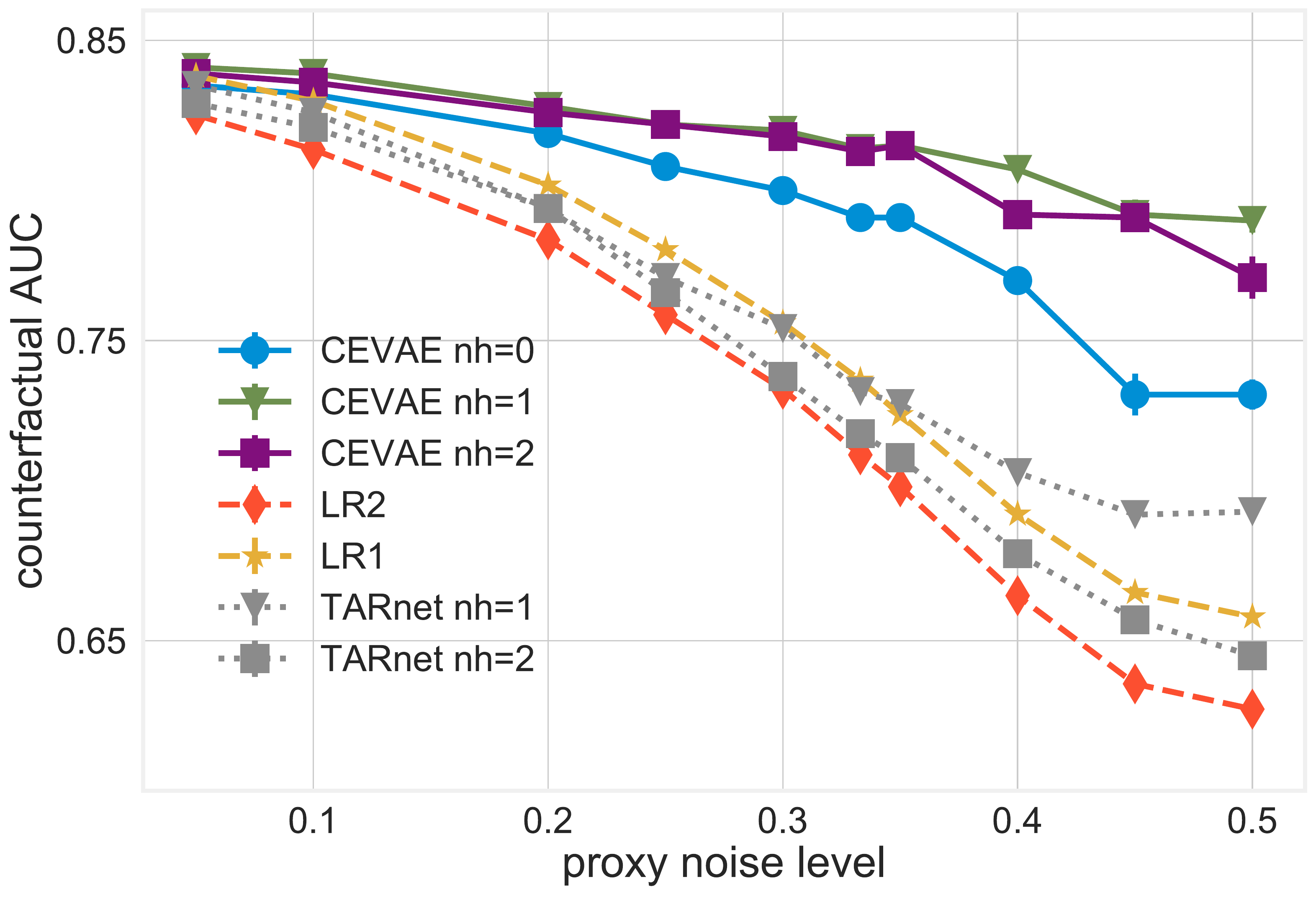}}\hfill%
    \subfigure[Absolute error ATE estimate; lower is better. Dashed black line indicates the error of using the naive ATE estimator: the difference between the average treated and average control outcomes. \label{fig:twin2}]{\includegraphics[width=.45\textwidth]{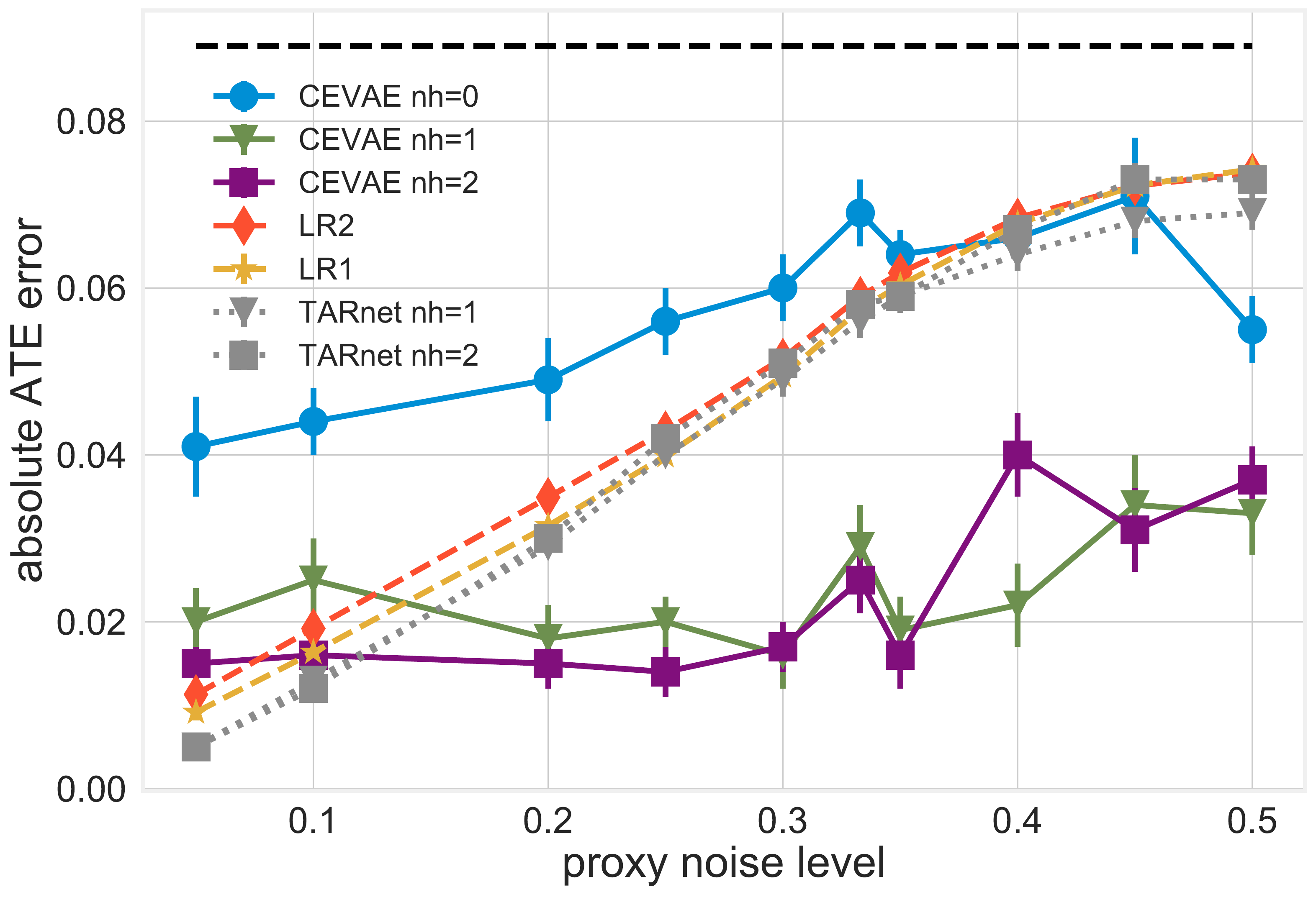}}
 \hspace*{\fill}%
 \caption{Results on the Twins dataset. $LR1$ is logistic regression, $LR2$ is two separate logistic regressions fit on the treated and control. ``nh'' is number of hidden layers used. TARnet with $nh=0$ is identical to $LR2$ and not shown, whereas CEVAE with $nh=0$ has a latent space component.}
 \label{fig:twins}
 \vskip -10pt
\end{figure*}

\section{Conclusion}
\vskip -10pt
In this paper we draw a connection between causal inference with proxy variables and the ground-breaking work in the machine learning community on latent variable models. Since almost all observational studies rely on proxy variables, this connection is highly relevant.

We introduce a model which is the first attempt at tying these two ideas together: The Causal Effect Variational Autoencoder (CEVAE), a neural network latent variable model used for estimating individual and population causal effects. In extensive experiments we showed that it is competitive with the state-of-the art on benchmark datasets, and more robust to hidden confounding both at a toy artificial dataset as well as modifications of real datasets, such as the newly introduced Twins dataset. 
For future work, we plan to employ the expanding set of tools available for latent variables models (e.g. \citet{kingma2016improving,tran2015variational,maaloe2016auxiliary,ranganath2016operator}), as well as to further explore connections between method of moments approaches such as \citet{anandkumar2014tensor} with the methods for effect restoration given by \citet{kuroki2014measurement,miao2016identifying}.

\subsubsection*{Acknowledgements}
We would like to thank Fredrik D. Johansson for valuable discussions, feedback and for providing the data for IHDP and Jobs. We would also like to thank Maggie Makar for helping with the Twins dataset. Christos Louizos and Max Welling were supported by TNO, NWO and Google. Joris Mooij was supported by the European Research Council (ERC) under the European Union's Horizon 2020 research and innovation programme (grant agreement 639466).

\subsubsection*{References}
\begingroup
\renewcommand{\section}[2]{}%
{\small{
\bibliography{example_paper}
}}
\endgroup
 \appendix
 \section*{Appendix}
\subsection*{A. Simple example where one should not adjust for proxy variables}\label{appsec:simp}

Let $\*Z,\*X,\*t,\*y$ all be binary variables following the graphical model in Figure 1(a).
Assume the following model:
\begin{enumerate}
\item $p(\*Z=1) = p(\*Z=0) = 0.5$.
\item $p(\*X=1|\*Z=1) = p(\*X=0|\*Z=0) = \rho_x$.
\item $p(\*t=1|\*Z=1) = p(\*t=0|\*Z=0) = \rho_t$.
\item $\*y = \*t \oplus \*Z$ ($\*y$ is deterministic)
\end{enumerate}
We will look at the quantity $p(\*y|do(\*t=1)$ and show that one should not simply adjust for $\*X$ when computing it. 
We have the following identities:
\begin{align*}
&p(\*y=1|do(\*t=1)) = \\
&\sum_\*z p(\*y=1|do(\*t=1),\*Z=\*z)p(\*Z=\*z|do(\*t=1)) = \\
&\sum_\*z p(\*y=1|\*t=1,\*Z=\*z)p(\*Z=\*z|\*t=1) = \\
&\sum_\*z p(\*y=1|\*t=1,\*Z=\*z)p(\*Z=\*z) = \\
&p(\*z=0) = 0.5.
\end{align*}

The covariate adjustment formula if we treated $\*X$ as if it's the only confounder:
\begin{align*}
&p^{\text{wrong}}(\*y=1|do(\*t=1)) = \\
&\sum_\*x p(\*y=1|\*t=1,\*X=\*x)p(\*X=\*x) = \\
&0.5 \cdot \left(p(\*y=1|\*t=1,\*x=0) + p(\*y=1|\*t=1,\*x=1)\right) = \\
&0.5\cdot\frac{p(\*t=1|\*z=0) p(\*x=0|\*z=0) p(\*z=0)}{p(\*t=1,\*x=0|\*z=0)p(\*z=0) + p(\*t=1,\*x=0|\*z=1)p(\*z=1)} + \\
& 0.5 \cdot \frac{p(\*t=1|\*z=0) p(\*x=1|\*z=0) p(\*z=0)}{p(\*t=1,\*x=1|\*z=0)p(\*z=0) + p(\*t=1,\*x=1|\*z=1)p(\*z=1)} = \\
& 0.5 \cdot  \left(\frac{(1-\rho_t)\rho_x}{(1-\rho_t) \rho_x + \rho_t (1-\rho_x)} + \frac{(1-\rho_t)(1-\rho_x)}{(1-\rho_t)(1-\rho_x) + \rho_t \rho_x}\right).
\end{align*}

A short inspection shows that we obtain the correct answer, $p^{\text{wrong}}(\*y=1|do(\*t=1)) = 0.5$, exactly under one of the following two conditions:
\begin{enumerate}
\item $\rho_t = 0.5$, i.e. treatment is assigned randomly.
\item $\rho_x = 0$ or $\rho_x = 1$, i.e. $\*X$ is exactly equal to $\*Z$ or $1-\*Z$, and thus is a perfect proxy for $\*Z$.
\end{enumerate}

The crucial misstep in $p^{\text{wrong}}$ above is the fact that
 $p(\*y=1|do(\*t=1),\*x) \neq p(\*y=1|\*t=1,\*x)$, while on the other hand $p(\*y=1|do(\*t=1),\*z) = p(\*y=1|\*t=1,\*z)$.
 
We note that because of the symmetry in the conditional distributions $p(\*X|\*Z=1)$ and $p(\*X|\*Z=0)$, we will actually have that:
$$p^{\text{wrong}}(\*y=1|do(\*t=1))  - p^{\text{wrong}}(\*y=1|do(\*t=0)) = p(\*y=1|do(\*t=1))  - p(\*y=1|do(\*t=0)).$$
 It is straightforward to show that this situation does not happen once we set different values for the two conditional distributions of $p(\*X|\*Z=1)$ and $p(\*X|\*Z=0)$, in which case both $p^{\text{wrong}}(\*y=1|do(\*t)) \neq p(\*y=1|do(\*t))$ for $\*t=0,1$, and: $$p^{\text{wrong}}(\*y=1|do(\*t=1))  - p^{\text{wrong}}(\*y=1|do(\*t=0)) \neq p(\*y=1|do(\*t=1))  - p(\*y=1|do(\*t=0)).$$
 
\end{document}